\newcommand{\ignore}[1]{}
\pgfplotsset{compat=1.10}
\definecolor{tblblue}{RGB}{101,124,191}
\definecolor{tblred}{rgb}{1,0.93,0.93}
\definecolor{DarkBlue}{rgb}{0,0,0.7} 
\definecolor{BrickRed}{RGB}{203,65,84}
\newtheorem{lemma}{Lemma}
\newtheorem{theorem}{Theorem}
\newtheorem{assumption}{Assumption}
\newcommand\empmean{\mathrm{mean}}
\newcommand\empvar{\mathrm{var}}
\newcommand\empdiag{\mathrm{diag}}
\newcommand\prior{G}
\newcommand\param{\mC}
\newcommand\loss{L}
\newcommand\img{\vx}
\newcommand\real{\text{real}}
\title{Channel Normalization in Convolutional Neural Network avoids Vanishing Gradients}
\author{Zhenwei Dai and Reinhard Heckel}
\date{May 2019}
\begin{document}

%%%%%%%%%%%%%%%%%%%%%%%%%%%%%%%%%%%%%%%%%%%%
\begin{center}

{\bf{\LARGE{
Channel Normalization in Convolutional Neural \\[0.2cm] Networks avoids Vanishing Gradients
}}}

\vspace*{.2in}

{\large{
\begin{tabular}{cccc}
Zhenwei Dai$^{\dagger}$ and Reinhard Heckel$^{\star,\ast}$
\end{tabular}
}}

\vspace*{.05in}

\begin{tabular}{c}
$^\ast$Dept. of Electrical and Computer Engineering 
and Dept. of Statistics$^\dagger$, Rice University \\
$^\star$Dept. of Electrical and Computer Engineering, Technical University of Munich \\
\end{tabular}

\vspace*{.1in}

\today

\vspace*{.1in}

\end{center}

\renewcommand{\Tnorm}{}
\renewcommand\norm[2][\Tnorm]{\ensuremath{{\left\|#2\right\|}_{#1}}}

%%%%%%%%%%%%%%%%%%%%%%%%%%%%%%%%%%%%%%%%%%%%%%

\begin{abstract}
Normalization layers are widely used in deep neural networks to stabilize training. 
In this paper, we consider the training of convolutional neural networks with gradient descent on a single training example.
This optimization problem arises in recent approaches for solving inverse problems such as the deep image prior or the deep decoder. 
We show that for this setup, channel normalization, which centers and normalizes each channel individually, avoids vanishing gradients, whereas without normalization, gradients vanish which prevents efficient optimization.
This effect prevails in deep single-channel linear convolutional networks, and we show that without channel normalization, gradient descent takes at least exponentially many steps to come close to an optimum. 
Contrary, with channel normalization, the gradients remain bounded, thus avoiding exploding gradients.
\end{abstract}

\section{Introduction}
Deep learning and in particular convolutional neural networks have significantly improved the state-of-the-art in computer vision, image generation, and computational imaging, among many other fields. 
Deep neural networks are typically trained using first order methods such as gradient descent and the stochastic gradient method. However, the corresponding loss function is non-convex and therefore, depending on the initialization, convergence to an optimum is not guaranteed, and first order methods sometimes suffer from unstable training and/or vanishing or exploding gradients.

Normalization layers are widely used to avoid vanishing or exploding gradients, stabilize training, and enable learning with higher rates and faster convergence. The perhaps most popular normalization technique is batch normalization~\cite{ioffe2015batch}; but a number of (often closely related) variations and alternatives have been proposed such as 
layer normalization~\cite{lei2016layer}, 
weight normalization~\cite{salimans2016weight}, and 
instance normalization~\cite{ulyanov2016instance}. 

A variety of recent works have proposed different explanations for the success of normalization layers. 
The original batch normalization paper~\cite{ioffe2015batch} suggested that batch normalization aids optimization by reducing a quantity called internal covariate shift. In contrast, Santurkar et al.~\cite{santurkar2018does} 
%provide empirical evidence contradicting this explanation, and 
 %suggest that controlling the mean and variance of layer inputs is not directly connected to improved training performance. 
reason that batch normalization reparameterizes the underlying optimization problem and thereby make its landscape significantly smoother.
Kohler et al.~\cite{kohler_jonas2018} linked batch normalization to weight normalization~\cite{salimans2016weight}, and pointed out that batch normalization accelerates the training process by splitting the optimization task into optimizing the length and direction of the parameters separately, and Bjorck~\cite{bjorck2018understanding} argues that (batch) normalization enables training with larger training rates. We add that whether normalization layers are useful or not depends strongly on the architecture and initialization. For example carefully initialized deep residual networks can be trained without any normalization layers~\cite{zhang2019fixup}. 

In this paper, we study channel normalization, which is a special case of a number of the above mentioned normalization techniques, in the context of a convolutional generator network. Channel normalization standardizes each channel in a convolutional neural network, individually for each training example, and scales and shifts the resulting vector with a (trainable) scalar. 
Channel normalization is equivalent to instance normalization~\cite{ulyanov2016instance} and to batch normalization for a single training example (then the batch size is one).

%We first demonstrate that channel normalization in convolutional networks avoids exploding and vanishing gradients and enables training a convolutional generator network on a single training example

We first train a convolutional network with gradient descent on a single training example, a problem that occurs in solving  inverse problems without training data~\cite{heckel2018deep, ulyanov2018deep}, and demonstrate that channel normalization avoids exploding and vanishing gradients and enables reaching a close-to-optimal point.
%We first demonstrate that channel normalization in convolutional networks avoids exploding and vanishing gradients when training a convolutional network on a single training example, as common for solving inverse problems~\cite{heckel2018deep, ulyanov2018deep}. 
Contrary, without channel normalization, gradient descent does not converge to an optimum in a reasonable number of iterations.
We then show analytically, for a special case of linear convolutional networks, that without channel normalization, gradient descent requires at least exponentially many steps to converge under mild initialization conditions.

%However, when the training sample size is small, normalizing across batch is not applicable. For some networks that are trained on just a few samples or just one sample, estimating the mini-Batch variance empirically is not feasible. Following the idea of BatchNorm, we can use another normalization approach---channel normalization to normalize each channel individually. We implemented the deep decoder \citep{heckel2018deep} and shows that channel normalization can significantly improve the learning rate compared with the case without normalization operations. 

%Though normalization has shown its power during training various networks, it remains unclear why normalization can help speeding up optimization. Previous researchers have proposed different explanations for the success of BatchNorm. \cite{ioffe2015batch} suggests that BatchNorm aids optimization since it reduces the internal covariate shift (ICS), while recently, \cite{santurkar2018does} suggest that controlling the mean and variance of layer inputs is not directly connected to improved training performance. Instead, they proposed that BatchNorm reparameterizes the underlying optimization problem and make its landscape significantly more smooth. \cite{kohler_jonas2018} linked BatchNorm to Weight Normalization (WN) \citep{salimans2016weight}, and pointed out that BatchNorm accelerates the training process by splitting the optimization task into optimizing the length and direction of the parameters separately.

The aforementioned works~\cite{ioffe2015batch,santurkar2018does,kohler_jonas2018} have studied normalization techniques by focusing on shallow networks (i.e., networks with one hidden layer), since analytical gradient expressions for deep networks with non-linearities are almost intractable. 
Here, we sidestep this hurdle by exploring a simpler model, specifically a linear convolutional network with a single channel. 
Studying such a simple model is justified by observing that even for this simple model, normalization is critical for fast convergence.

%%%

\section{Channel normalization}
\label{sec:channelnorm}

\begin{figure}
\begin{center}
\begin{tikzpicture}

\node at (-2.5,1.1) {\includegraphics[width = 0.12\textwidth]{./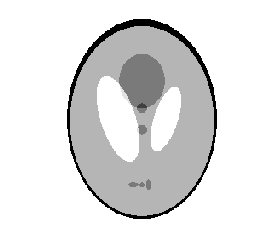} };

\begin{groupplot}[
y tick label style={/pgf/number format/.cd,fixed,precision=3},
scaled y ticks = false,
legend style={at={(1,1)} , font=\tiny ,
/tikz/every even column/.append style={column sep=-0.1cm}
 },
         group
         style={group size= 2 by 1, xlabels at=edge bottom,
         yticklabels at=edge left,
         xticklabels at=edge bottom,
         horizontal sep=0.4cm, vertical sep=0.4cm,
         }, 
           xlabel = {\small optimizer steps},
         width=0.24\textwidth,height=0.24\textwidth,
         ymin = 0.0001,
         ymax = 1,
         ]
%%%

%%%
\nextgroupplot[title = {with norm},xmode=log,ymode=log, ylabel={train loss}]
	\addplot +[mark=none] table[x index=1,y index=2]{./conv_gen_norm.dat};
%%%
\nextgroupplot[title = {without norm},xmode=log,ymode=log]
	\addplot +[mark=none] table[x index=1,y index=2]{./conv_gen_no_norm.dat};
%%%
      
\end{groupplot}

%%%%%%%%%

\begin{scope}[xshift = 7.5cm]
\begin{groupplot}[
y tick label style={/pgf/number format/.cd,fixed,precision=3},
scaled y ticks = false,
legend style={at={(1,1)} , font=\tiny ,
/tikz/every even column/.append style={column sep=-0.1cm}
 },
         group
         style={group size= 2 by 1, xlabels at=edge bottom,
         yticklabels at=edge left,
         xticklabels at=edge bottom,
         horizontal sep=0.4cm, vertical sep=0.4cm,
         }, 
           xlabel = {\small optimizer steps},
         width=0.24\textwidth,height=0.24\textwidth,
         ymin = 0.00000001,
         ymax = 1,
         ]
%%%
%%%
\nextgroupplot[title = {with norm},xmode=log,ymode=log, ylabel={norm grad}]
	\addplot +[mark=none] table[x index=1,y index=3]{./conv_gen_norm.dat};
	\addplot +[mark=none] table[x index=1,y index=4]{./conv_gen_norm.dat};
	\addplot +[mark=none] table[x index=1,y index=5]{./conv_gen_norm.dat};
	\addplot +[mark=none] table[x index=1,y index=6]{./conv_gen_norm.dat};
	\addplot +[mark=none] table[x index=1,y index=7]{./conv_gen_norm.dat};
	
%%%
\nextgroupplot[title = {without norm},xmode=log,ymode=log]
	\addplot +[mark=none] table[x index=1,y index=3]{./conv_gen_no_norm.dat};
	\addplot +[mark=none] table[x index=1,y index=4]{./conv_gen_no_norm.dat};
	\addplot +[mark=none] table[x index=1,y index=5]{./conv_gen_no_norm.dat};
	\addplot +[mark=none] table[x index=1,y index=6]{./conv_gen_no_norm.dat};
	\addplot +[mark=none] table[x index=1,y index=7]{./conv_gen_no_norm.dat};
      
\end{groupplot}   
\end{scope}

\end{tikzpicture}
\end{center}
\vspace{-0.8cm}
\caption{
\label{fig:phantormMRI}
The train loss and the norm of the gradients in each of the 5 layers of a convolutional neural generator network with and without channel normalization for fitting the phantom MRI image:
Without normalization the gradients vanish before gradient descent reaches a good solution.
}
\end{figure}

We start by introducing channel normalization and then show empirically that it is critical for running gradient descent efficiently on a convolutional generators trained on a single example.

The channel normalization operation normalizes each channel of a convolutional network individually. 
Let $\vz_{ij}$ be the input of the $j$-th channel and the $i$-th layer. Channel normalization performs the transformation
\[
\vz'_{ij} = \frac{ \vz_{ij} - \empmean(\vz_{ij})  }{ \sqrt{ \empvar(\vz_{ij}) +\epsilon}}\gamma_{ij} + \beta_{ij},
\]
where $\empmean$ and $\empvar$ compute the empirical mean and variance, $\gamma_{i,j}$ and $\beta_{ij}$ are parameters learned independently for each channel, and $\epsilon$ is a fixed small constant added for numerical stability.
%Let $_{i} = ReLU( \nn (I_i))$ be operation of $i$-th layer, where $O_{i}$, $I_i$ are the output and input of $i$-th layer respectively, and $g$ is the activation function. Let $I_{ij}$ be the input of the $i$-th layer and $j$-th channel. Then, the channel normalization $Norm(I_{ij}) = \frac{I_{ij} - mean(I_{ij})}{\sqrt{Var(I_{ij})}}\gamma_{ij} + \beta_{ij}$. Our next experiment will show channel normalization is critical during the training deep decoder. 

We consider a variant of the deep decoder introduced in~\cite{heckel2018deep}. %which in its original form consists of bilinear upsampling and convolutional operations, followed by channel normalization and ReLU layers. 
The network works well for image compression and for regularizing a variety of inverse problems, when trained or fitted to a single image only.
Specifically, we consider an extremely simple convolutional generator consisting of $d=5$ many 3x3 convolutional layers, followed by channel normalization and ReLU activation functions. 
Each layer has $k=32$ channels, and the last layer is a 1x1 convolutional layer mapping the $k$ channels to a single, 256x256 grayscale output image. 
The input to the network is a 32x256x256 volume that is chosen randomly and is fixed (i.e., we do not optimize over the input). 
Given an image $\vx^\ast$ we then fit the parameters of the network (i.e., the weights) by minimizing the loss
$
\loss(\param) = \norm[2]{\prior(\param) - \img^\ast }^2
$
with respect to the network parameters $\param$ using plain gradient descent with fixed stepsize. 

Figure~\ref{fig:phantormMRI} shows the results for the phantom MRI image for a network with and without channel normalization. With channel normalization, the training loss converges rapidly and the gradients do not vanish. Contrary, without normalization, the network does not converge to a small error (even though the network has the capacity to represent the image), and the gradients vanish. 
This effect is not specific to the image (we have reproduced it using 100 randomly chosen images from imagenet), and it is also reproducible for a number of related convolutional generators, for example networks including upsampling operations.

%Without channel normalization, figure shows that the training error improves very slowly after a few steps and the gradients almost decrease to zero as well, which exactly shows that our training process complicated by gradient vanishing problem. However, with channel normalization, gradient descent converge fast to a much smaller training error. Moreover, the gradients of different layers mix well, suggesting all the layers contribute to the optimization. 

%%%
\section{Isolating the effect of channel normalization}
\label{sec:isolating}

We next show that to achieve the stabilizing effect of channel normalization, the trainable coefficients $\gamma_{ij}$ and $\beta_{ij}$ do not need to be learned and can be set to one and zero, respectively. We also demonstrate that even for linear networks, channel normalization is critical to avoid vanishing gradients. This justifies our theoretical study of linear networks in the next section.

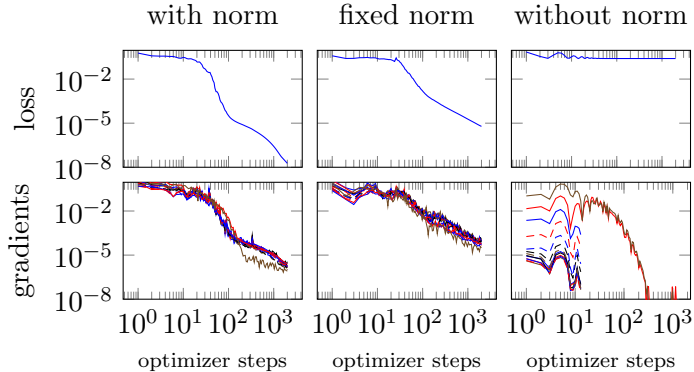
\begin{figure}
\vspace{-0.5cm}
\begin{center}
\begin{tikzpicture}
\begin{groupplot}[
y tick label style={/pgf/number format/.cd,fixed,precision=3},
scaled y ticks = false,
legend style={at={(1,1)} , font=\tiny ,
/tikz/every even column/.append style={column sep=-0.1cm}
 },
         group
         style={group size= 3 by 2, xlabels at=edge bottom,
         yticklabels at=edge left,
         xticklabels at=edge bottom,
         horizontal sep=0.2cm, vertical sep=0.2cm,
         }, 
         width=0.24\textwidth,height=0.19\textwidth,
         ymin = 0.00000001,
         ymax = 0.9,
         ]
%%%

%%%
\nextgroupplot[title = {with norm},xmode=log,ymode=log, ylabel={loss}]
	\addplot +[mark=none] table[x index=1,y index=2]{./data/BN_grad_mse.dat};
%%%
\nextgroupplot[title = {fixed norm}, xmode=log,ymode=log]
	\addplot +[mark=none] table[x index=1,y index=2]{./data/NL_grad_mse.dat};
%%%	
\nextgroupplot[title = {without norm}, xmode=log,ymode=log]
	\addplot +[mark=none] table[x index=1,y index=2]{./data/LN_grad_mse.dat};
%%%

\nextgroupplot[ylabel={gradients}, xmode=log,ymode=log,xlabel={\scriptsize optimizer steps}]
	\addplot +[mark=none] table[x index=1,y index=3]{./data/BN_grad_mse.dat};
	\addplot +[mark=none] table[x index=1,y index=4]{./data/BN_grad_mse.dat};
	\addplot +[mark=none] table[x index=1,y index=5]{./data/BN_grad_mse.dat};
	\addplot +[mark=none] table[x index=1,y index=6]{./data/BN_grad_mse.dat};
	\addplot +[mark=none] table[x index=1,y index=7]{./data/BN_grad_mse.dat};
	\addplot +[mark=none] table[x index=1,y index=8]{./data/BN_grad_mse.dat};
	\addplot +[mark=none] table[x index=1,y index=9]{./data/BN_grad_mse.dat};
	\addplot +[mark=none] table[x index=1,y index=10]{./data/BN_grad_mse.dat};
	\addplot +[mark=none] table[x index=1,y index=11]{./data/BN_grad_mse.dat};
	\addplot +[mark=none] table[x index=1,y index=12]{./data/BN_grad_mse.dat};
	\addplot +[mark=none] table[x index=1,y index=13]{./data/BN_grad_mse.dat};
	\addplot +[mark=none] table[x index=1,y index=14]{./data/BN_grad_mse.dat};
	\addplot +[mark=none] table[x index=1,y index=15]{./data/BN_grad_mse.dat};
%%%
\nextgroupplot[xmode=log,ymode=log,xlabel={\scriptsize optimizer steps}]
	\addplot +[mark=none] table[x index=1,y index=3]{./data/NL_grad_mse.dat};
	\addplot +[mark=none] table[x index=1,y index=4]{./data/NL_grad_mse.dat};
	\addplot +[mark=none] table[x index=1,y index=5]{./data/NL_grad_mse.dat};
	\addplot +[mark=none] table[x index=1,y index=6]{./data/NL_grad_mse.dat};
	\addplot +[mark=none] table[x index=1,y index=7]{./data/NL_grad_mse.dat};
	\addplot +[mark=none] table[x index=1,y index=8]{./data/NL_grad_mse.dat};
	\addplot +[mark=none] table[x index=1,y index=9]{./data/NL_grad_mse.dat};
	\addplot +[mark=none] table[x index=1,y index=10]{./data/NL_grad_mse.dat};
	\addplot +[mark=none] table[x index=1,y index=11]{./data/NL_grad_mse.dat};
	\addplot +[mark=none] table[x index=1,y index=12]{./data/NL_grad_mse.dat};
	\addplot +[mark=none] table[x index=1,y index=13]{./data/NL_grad_mse.dat};
	\addplot +[mark=none] table[x index=1,y index=14]{./data/NL_grad_mse.dat};
	\addplot +[mark=none] table[x index=1,y index=15]{./data/NL_grad_mse.dat};
%%%
\nextgroupplot[xlabel={\scriptsize optimizer steps},xmode=log,ymode=log]
	\addplot +[mark=none] table[x index=1,y index=3]{./data/LN_grad_mse.dat};
	\addplot +[mark=none] table[x index=1,y index=4]{./data/LN_grad_mse.dat};
	\addplot +[mark=none] table[x index=1,y index=5]{./data/LN_grad_mse.dat};
	\addplot +[mark=none] table[x index=1,y index=6]{./data/LN_grad_mse.dat};
	\addplot +[mark=none] table[x index=1,y index=7]{./data/LN_grad_mse.dat};
	\addplot +[mark=none] table[x index=1,y index=8]{./data/LN_grad_mse.dat};
	\addplot +[mark=none] table[x index=1,y index=9]{./data/LN_grad_mse.dat};
	\addplot +[mark=none] table[x index=1,y index=10]{./data/LN_grad_mse.dat};
	\addplot +[mark=none] table[x index=1,y index=11]{./data/LN_grad_mse.dat};
	\addplot +[mark=none] table[x index=1,y index=12]{./data/LN_grad_mse.dat};
	\addplot +[mark=none] table[x index=1,y index=13]{./data/LN_grad_mse.dat};
	\addplot +[mark=none] table[x index=1,y index=14]{./data/LN_grad_mse.dat};
	\addplot +[mark=none] table[x index=1,y index=15]{./data/LN_grad_mse.dat};
%%%
\end{groupplot}

\end{tikzpicture}
\end{center}
\vspace{-0.8cm}
\caption{The training error and the norms of the gradients of each layer verses the number of gradient descent steps: Both channel normalization with fixed scale and bias parameters and learned ones enable efficiently finding an optimum.}
\label{fig2}
\end{figure}

\textbf{Multiple Channels CNN:}
We first consider a one-dimensional convolutional network, again only consisting of convolutional layers followed by channel normalization and ReLU activation functions. 
We set the dimension of input/output vector to $n=256$, the number of channels to $k=4$, number of hidden layers to $d=12$, and convolutional kernel size to $3$. The entries of the input vector $\vx$ are sampled from a standard uniform distribution. 
As before, we minimize the least squares loss with respect to the weight parameters using gradient descent with fixed step size.
We consider three different normalization operations: the original channel normalization, a variant where $\gamma_{ij}=1$ and $\beta_{ij}=0$ (called fixed norm), and no normalization. 
We consider the problem of fitting a simple step function. 
The results reported in Figure~\ref{fig2} show that both normalization versions enable efficient optimization with gradient descent (the training error is near-zero), whereas without channel normalization the training error does not improve after a few iterations and the gradients vanish.

\textbf{Single Channel Linear CNN:} 
Next, we consider an even simpler network with only one channel and without activation functions. Without normalization, the network is linear.  
We set the dimension of input/output vector to $n=64$, number of hidden layers $d=10$, and convolution kernel size $k=9$. 
The results are very similar to the previous experiment in Figure~\ref{fig2} (see Figure~\ref{fig3} in the appendix), and demonstrates the critical role of channel normalization.

We also evaluated the loss function landscape around the point at convergence (Figure~\ref{fig:dist_landscape}). For both multi-channel CNNs and linear CNNs, %the loss function is steeper a with channel normalization,iwhich encourages a larger update of parameters after each iteration. On the contrary, 
without normalization, the loss function becomes very flat, in comparison to the more steep loss surface pertaining to the case with channel normalization. This indicates that the gradients around the point of convergence are close to $0$, and gradient descent makes little to no progress if the iterates fall into such flat regions.

\begin{figure}[hbt]
\begin{center}
\begin{tikzpicture}[scale=0.50]
\begin{axis}[unbounded coords=jump, view={60}{20},colormap/viridis, title={a) muti-channel, with normalization}]

\addplot3 [surf,shader=faceted] table {./data/Multi_channel_CNN_Norm_Landscape.dat};
\end{axis}

\begin{scope}[xshift = 8cm]
\begin{axis}[unbounded coords=jump, view={60}{20},colormap/viridis,title={b) muti-channel, w/o normalization}]
\addplot3 [surf,shader=faceted] table {./data/Multi_channel_CNN_Linear_Landscape.dat};
\end{axis}
\end{scope}

\begin{scope}[xshift = 16cm]
\begin{axis}[unbounded coords=jump, view={60}{20},colormap/viridis,title={c) single-channel, with normalization}]
\addplot3 [surf,shader=faceted] table {./data/Linear_CNN_Norm_Landscape.dat};
\end{axis}
\end{scope}

\begin{scope}[xshift = 24cm]
\begin{axis}[unbounded coords=jump, view={60}{20},colormap/viridis,title={d) single-channel, w/o normalization}]
\addplot3 [surf,shader=faceted] table {./data/Linear_CNN_Linear_Landscape.dat};
\end{axis}
\end{scope}
\end{tikzpicture}
\end{center}
\vspace{-0.5cm}
\caption{
\label{fig:dist_landscape}
Panels a and b show the landscape around the point of convergence with and without channel normalization for multi-channel CNNs with $d=10$, filter size $9$, and number of channels $4$;
panels c and d contain the same plots but for a single channel linear CNN with $d=10$ and filter size $9$.
}
\end{figure}
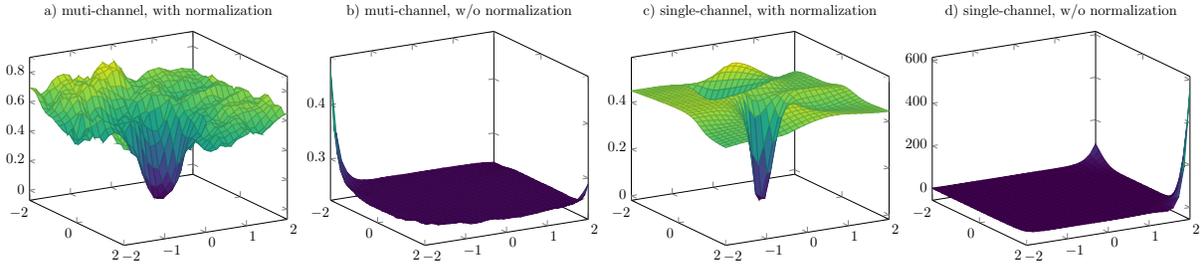

%%%
% convergence analysis of single channel linear CNNs
\section{
Theoretical analysis %of single channel linear CNNs
%Without channel normalization optimization of provably hard
}
\label{sec:theory}

From the previous section, we know that channel normalization avoids vanishing gradients even for linear, one-layer convolutional neural networks, and that the scale and shift parameters can be set to one and zero. 
In this section, we provide theoretical justification for the difficulty of optimization in the absence of normalization, and justification for the stabilizing effect of channel normalization.

%\textbf{Setup:} 
Throughout this section, we consider a single channel linear convolutional neural network with $d$ layers, with output given as 
$f(\vx,\vw) = \prod_{i=1}^{d} \mW_i \vx$, 
where $\mW_i \in \reals^{n \times n}$ are circulant matrices implementing the convolution operation, and $\vw = (\vw_1,\vw_2,\ldots,\vw_d)$ is the set of weights or convolutional filters, given by the first columns of the respective circulant matrices $\mW_i,\ldots,\mW_{d}$ (which define all other entries of the matrices). 
We study gradient descent applied to the squared loss function $
\loss(\vw,\vx,\vy)
= 
\frac{1}{2}
\norm[2]{\vy - f(\vx,\vw) }^2
$.
%\textbf{Hardness without channel normalization:}
We start by showing that without channel normalization gradient descent needs at least exponentially many steps to converge under a standard initialization scheme. 

\begin{theorem}
\label{thm:expmanysteps}
Suppose that the signal $\vy$ doesn't vanish, i.e., $\norm{\vy} / \norm{\vx} \geq d n^{d/2} \tau$, where $\tau$ is a constant. 
Moreover, suppose that $\vy$ is in the range of the generator $f$ and that the initial weights are Gaussian random variables with zero mean and covariance matrix $1/np$ ($p$ is the kernel size of $i$-th convolution layer). Then gradient descent with constant stepsize $\eta \leq \exp(cd)$ runs at least for $\exp(\Omega(d))$ steps until it reaches a point that is $c'$ close to optimal with probability larger than $1 - \exp(-\Omega(d))$. Here, $c$ and $c'$ are constants independent of $d$.
\end{theorem}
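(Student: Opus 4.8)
The plan is to pass to the Fourier domain, where $f(\vx,\vw)$ acts on each frequency of $\vx$ by multiplication with a scalar gain $\prod_{i=1}^d\lambda_{ij}$, and to argue that pushing $\loss$ down to $c'$ forces the product of spectral norms $\prod_i\norm[2]{\mW_i}$ to grow from an exponentially-in-$d$ small value at initialization up to $\approx\norm{\vy}/\norm{\vx}$, a growth that gradient descent performs exponentially slowly because, while that product is small, so is every partial derivative of $\loss$. (We may take $\norm{\vx}=1$.)

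I would first pin down the two endpoints of this growth. The circulant $\mW_i$ are simultaneously diagonalized by the DFT, their eigenvalues $\lambda_{ij}$ are the DFTs of the zero-padded filters $\vw_i$, so $\loss(\vw,\vx,\vy)=\tfrac12\sum_j|\hat y_j-\hat x_j\prod_i\lambda_{ij}|^2$ and $\norm[2]{\mW_i}=\max_j|\lambda_{ij}|$. Since $\vy$ is in the range of $f$ the optimal loss is $0$, so $\loss\le c'$ forces $\norm{f(\vx,\vw)}\ge\norm{\vy}-\sqrt{2c'}\ge\tfrac12\norm{\vy}$ and hence $\prod_i\norm[2]{\mW_i}\ge\norm{f(\vx,\vw)}\ge\tfrac12 dn^{d/2}\tau$ by the non-vanishing hypothesis. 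At initialization $\mathbb{E}|\lambda_{ij}(0)|^2=1/n$ for every $i,j$; since $\max_j|\lambda_{ij}(0)|$ is the maximum of $n$ comparable quantities whose squares sum to $\approx 1$, one has $\mathbb{E}\log\norm[2]{\mW_i(0)}=-\kappa$ for a constant $\kappa>0$ as soon as $n$ exceeds an absolute constant, and summing the $d$ independent layer contributions with a sub-exponential tail bound gives $\prod_i\norm[2]{\mW_i(0)}\le e^{-\kappa d/2}$ together with the balanced bounds $\norm[2]{\mW_i(0)}\in[e^{-2\kappa},e^{-\kappa/2}]$ for all $i$, with probability $1-e^{-\Omega(d)}$. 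This is the source of the failure probability, and it forces the constant $c$ in the step-size bound to satisfy $e^{c}<\sqrt n$.

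The heart is a bootstrap. Let $\Pi(t)=\prod_i\norm[2]{\mW_i(t)}$ and run the trajectory as long as $\loss(t)>c'$ and $\Pi(t)\le e^{-\kappa d/4}$. Under this hypothesis $\norm{f(\vx,\vw(t))}\le\Pi(t)\ll\norm{\vy}$, so the residual $\vr(t)=\vy-f(\vx,\vw(t))$ has $\norm{\vr(t)}\le 2\norm{\vy}$ (in particular $\loss(t)\ge\tfrac14\norm{\vy}^2$), and since $\nabla_{\vw_k}\loss$ is the length-$p$ truncation of the cross-correlation of $\mW_1^{\!\top}\!\cdots\mW_{k-1}^{\!\top}\vr(t)$ with $\mW_{k+1}\cdots\mW_d\vx$, we get $\norm{\nabla_{\vw_k}\loss(t)}\le C_1\big(\prod_{l\ne k}\norm[2]{\mW_l(t)}\big)\norm{\vr(t)}$, which is exponentially small in $d$. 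Because for circulant matrices the balancedness defect $\mW_{i+1}^{\!\top}\mW_{i+1}-\mW_i\mW_i^{\!\top}$ is diagonal and is conserved up to the $O(\eta^2)$ discretization error, the $\norm[2]{\mW_i(t)}$ stay mutually comparable, which converts the per-layer bounds into $\log\Pi(t+1)-\log\Pi(t)\le C_2\, d\sqrt p\,\Pi(t)^{1-2/d}\norm{\vy}$ and $\norm{\nabla\loss(t)}^2\le C_3\, d\,\Pi(t)^{2-2/d}\norm{\vy}^2$. A discrete separation-of-variables estimate on the first inequality shows $\Pi$ cannot climb from $e^{-\kappa d/2}$ up to $e^{-\kappa d/4}$ in fewer than $T^{\ast}=e^{\Omega(d)}$ steps, so the bootstrap window is self-consistent for that long; and the total loss change over it is at most $\sum_{t<T^{\ast}}\eta\norm{\nabla\loss(t)}^2\le T^{\ast}\eta C_3 d\,\big(e^{-\kappa d/4}\big)^{2-2/d}\norm{\vy}^2$, which is $o(\norm{\vy}^2)$ for every $\eta\le e^{cd}$ (using $e^{c}<\sqrt n$ and $\norm{\vy}\ge dn^{d/2}\tau$). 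Hence $\loss(t)\ge\tfrac14\norm{\vy}^2\gg c'$ for all $t\le T^{\ast}$: gradient descent has not reached a $c'$-optimal point within $e^{\Omega(d)}$ steps.

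The hard part is the bootstrap: carrying both the smallness of $\Pi(t)$ and the mutual comparability of the $\norm[2]{\mW_i(t)}$ --- which is what turns the per-layer gradient bound into a product bound --- across the full $e^{\Omega(d)}$ iterations while only the $O(\eta^2)$ deviations accumulate. The $p<n$ coupling (updates of one Fourier mode $\lambda_{ij}$ leak into neighboring modes through a Dirichlet kernel) is harmless for spectral norms but must be tracked in the balancedness step. One also needs $\norm{\vy}/\norm{\vx}$ not to be super-exponentially large in $d$ --- concretely $\le n^{Cd}$ for a suitable constant $C$, which the natural scale $n^{d/2}$ and indeed the whole range the statement cares about satisfies --- so that $T^{\ast}$ stays $e^{\Omega(d)}$; pinning down how $c,c'$ depend on $n$ is the remaining bookkeeping.
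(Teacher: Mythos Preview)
Your route is genuinely different from the paper's. You track the trajectory of $\Pi(t)=\prod_i\norm[2]{\mW_i(t)}$ and run a bootstrap that keeps the layers mutually comparable via an (approximate) balancedness invariant, so that the gradients stay exponentially small while $\Pi(t)$ slowly climbs. The paper instead uses a purely static escape-time argument in the spirit of Shamir: it fixes a ball $\mathcal{B}(\vw,r)$ of radius $r=\Theta(e^{-cd}/\sqrt d)$ around the initialization, proves the uniform bound $\norm{\nabla L(\vv)}\le n^{d/2}\norm{\vx}\,\norm{D(\vv)}\,\alpha\sqrt d\,\exp(\sqrt d\,r/\delta)$ for all $\vv\in\mathcal{B}(\vw,r)$, where $\alpha=\max_k\prod_{i\ne k}\norm{\vw_i}$ and $\delta=\min_i\norm{\vw_i}$ are controlled at initialization by elementary Markov and union bounds, and then simply counts $r/(\eta\sup_{\mathcal B}\norm{\nabla L})\ge e^{\Omega(d)}$ steps to escape. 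No dynamics are tracked beyond ``the iterate has not yet left the ball.'' The paper also works with the filter norms $\norm{\vw_i}$ rather than spectral norms, which keeps the initialization analysis to a couple of lines.

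The concern with your approach is exactly the step you flag as the hard part. The balancedness law $\mW_{i+1}^{\top}\mW_{i+1}-\mW_i\mW_i^{\top}=\text{const}$ is an invariant of gradient flow on \emph{unconstrained} linear networks; here the circulant parametrization together with the projection onto $\mathbb{S}_p$ for $p<n$ means you are running projected gradient descent, and the per-frequency invariant $|\lambda_{i+1,j}|^2-|\lambda_{ij}|^2=\text{const}$ is destroyed by precisely the Dirichlet coupling you mention. You do not say what replaces it, and it is not obvious that the $O(\eta^2)$ drifts remain negligible over $e^{\Omega(d)}$ iterations when $\eta$ itself can be as large as $e^{cd}$. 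The paper's ball argument sidesteps all of this: no invariant is needed, the mutual comparability of the layers is enforced simply by remaining within distance $r$ of the initialization (where $\min_i\norm{\vw_i}\ge\delta$ is already known), and the projection is dispatched in one line via nonexpansiveness of $\Pi_{\mathbb{S}_p}$. If you want to rescue your plan, the cleanest fix is exactly this: drop the dynamical balancedness, freeze a small ball around $\vw(0)$, bound the gradient uniformly there, and count escape time --- at which point you have essentially reproduced the paper's proof.
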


The proof, deferred to the appendix, relies on diagonalizing the circulant matrices $\mW_i$ using the Fourier transform. Then, the optimization problem reduces to $n$ one-dimensional problems, and we can build on results by Shamir~\cite{shamir2018exponential} on the hardness of optimizing one-dimensional deep neural networks. We also notice that when the kernel size of convolution layer $p$ is smaller than $n$, gradient descent is only applied to the first $p$ entries of $\vw_i$ and other entries are always fixed to $0$. In this case, the gradient descent on the first $p$ entries of $\vw_i$ is equivalent to the projected gradient descent on all entries of $\vw_i$ while projecting the last $(n-p)$ entries to $0$ during each iteration.

Theorem~\ref{thm:expmanysteps} shows that the number of steps to come close to the optimum is at least exponential in the network depth $d$, even when the stepsize is large (exponential in $d$). This can be interpreted as a case of gradient vanishing. 
Also note that if we initialize the weight away from $0$ with another initialization scheme, e.g., $\vw_k$ is initialized following Gaussian distribution with covariance matrix $\mI$ , then the norms of the gradients increase exponentially fast with the network depth, and the network becomes difficult to optimize due to exploding gradients.

Next, we evaluate the effect of channel normalization on the gradients. Since our experiments have shown that fixing scale and shift parameters or learning them yields comparable performance, we focus on the case where they are fixed to $\gamma_{ij}=1$ and $\beta_{ij}=0$. 
Suppose that the input $\vx$ has zero mean. Then the gradients pertaining to the loss function with normalized loss are (see Appendix~\ref{sec:normlossgrad}):
\begin{align*}
\nabla_{\vw_k} L_N(\mW,\vx,\vy)
=
\transp{\mX}_k
\frac{w_{d+1}}{\norm{ \mX_k \vw_k }} \left(
\mI - \mX_k \vw_k \transp{\vw}_k \transp{\mX}_k  / \norm{\mX_k \vw_k }^2
\right) \vy,
\quad 
\mX_k = \prod_{i\neq k} \mW_i \mX,
\end{align*}
where $\mX$ is the circulant matrix with first column $\vx$, and $w_{d+1}$ is a scale parameter that we optimize over and necessary so that the range of the network can exhaust $\reals^n$. 
By this expression, the gradients are obtained by projecting $\vy$ onto the orthogonal complement of the estimate at the $k$-th iteration, $\mX_k \vw_k$, followed by multiplication with $\transp{\mX}_k/\norm{\mX_k \vw_k}$. In contrast, the norm of the un-normalized gradients is given by $\nabla_{\vw_k} L(\mW,\vx,\vy) = \transp{\mX}_k (\vy - \mX_k \vw_k)$. In Figure~\ref{fig:dist_gradnorms} Panel (a) and (b) we plot the distribution with and without normalization at initialization for a network with $n=100$ and $d=6$ layers. 
Note that the loss typically diverges at the first few iterations, which justifies considering the gradients at initialization. The results show that the normalization leads to the gradients to be significantly better behaved, i.e., the distribution does have a significantly smaller tail.

In Figure~\ref{fig:dist_gradnorms} Panel (c) and (d) we plot the distribution for a multi-channel CNN with ReLU activation functions, and likewise, the results shows that without channel normalization, the tail is significantly larger. 
Thus, without normalization for a given stepsize the network is much more susceptible to vanishing or exploding gradients.

\definecolor{DarkBlue}{rgb}{0,0,0.7} 
\definecolor{BrickRed}{RGB}{203,65,84}

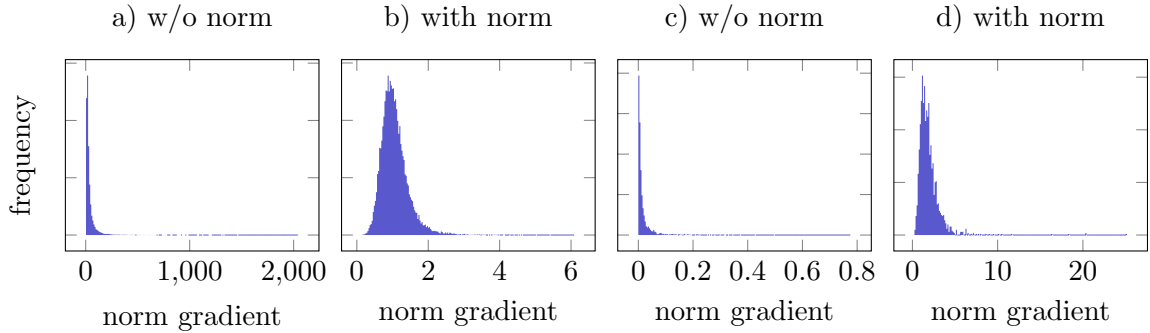
\begin{figure}[h!]
\begin{center}
\begin{tikzpicture}

\begin{groupplot}[
y tick label style={color=white, draw = none},
scaled y ticks = false,
         group
         style={group size=4 by 3, %xlabels at=edge bottom, 
         ylabels at={edge left}, 
         yticklabels at=edge left,
         horizontal sep=0.3cm,vertical sep=2.3cm}, xlabel={norm gradient},
         width=0.3\textwidth,
         height=0.25\textwidth
]
	\nextgroupplot[title = {a) w/o norm}, ylabel={frequency}, yticklabels={,,}]
	\addplot +[ycomb,DarkBlue!65,mark=none] table[x index=0,y index=1]{./data/hist_nonorm.data};

	\nextgroupplot[title = {b) with norm}]
	\addplot +[ycomb,DarkBlue!65,mark=none] table[x index=0,y index=1]{./data/hist_norm.data};
	
	\nextgroupplot[title = {c) w/o norm}]
	\addplot +[ycomb,DarkBlue!65,mark=none] table[x index=0,y index=1]{./data/hist_CNN_linear_b200.data};
	
	\nextgroupplot[title = {d) with norm}]
	\addplot +[ycomb,DarkBlue!65,mark=none] table[x index=0,y index=1]{./data/hist_CNN_norm_b200.data};

\end{groupplot}          
\end{tikzpicture}
\end{center}

\vspace{-0.5cm}
\caption{
\label{fig:dist_gradnorms}
Panel a) and b)  show the distribution of the gradients at initialization with normalization and without normalization for a single channel linear CNN with $n=100$ and $d=6$; c) and d) show that for a multiple channel CNN with $n=64$, $d=6$, filter size $9$ and number of channels $4$.
}
\end{figure}

\newpage

\section*{Code}
Code to reproduce the experiments is available at \href{https://github.com/reinhardh/normalization_dnns}{github.com/reinhardh/normalization\_dnns}.

\section*{Acknowledgements}
RH and ZD are partially supported by NSF award IIS-1816986.

\printbibliography

%%%%%%%%%%%%%

\newpage

\appendix

\section*{Appendix}

\section{Convergence for linear single layer convolutional networks}

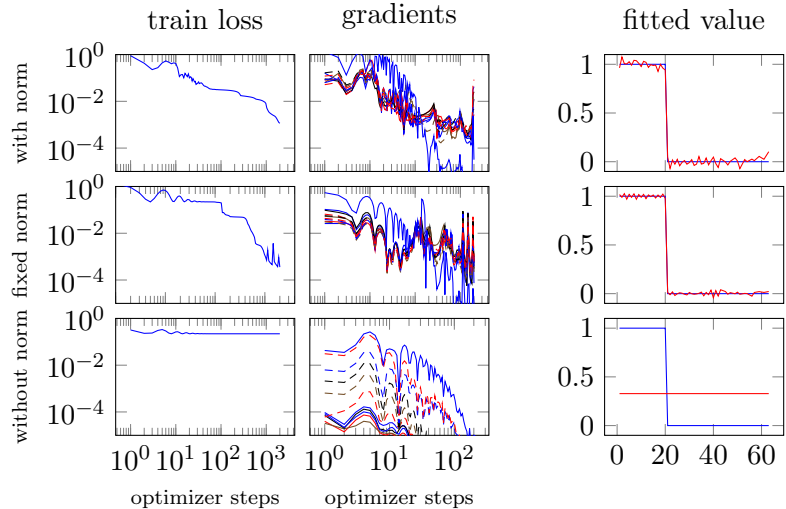
\begin{figure}[hbt]
\begin{center}
\begin{tikzpicture}
\begin{groupplot}[
y tick label style={/pgf/number format/.cd,fixed,precision=3},
scaled y ticks = false,
legend style={at={(1,1)} , font=\tiny ,
/tikz/every even column/.append style={column sep=-0.1cm}
 },
         group
         style={group size= 2 by 3, xlabels at=edge bottom,
         yticklabels at=edge left,
         xticklabels at=edge bottom,
         horizontal sep=0.2cm, vertical sep=0.2cm,
         }, 
         width=0.24\textwidth,height=0.19\textwidth,
         ymin = 0.00001,
         ymax = 1,
         ]
%%%

%%%
\nextgroupplot[title = {train loss},xmode=log,ymode=log, ylabel={\scriptsize with norm}]
	\addplot +[mark=none] table[x index=1,y index=2]{./data/BN_grad_mse2.dat};
%%%
\nextgroupplot[title = {gradients}, xmode=log,ymode=log]
	\addplot +[mark=none] table[x index=1,y index=3]{./data/BN_grad_mse2.dat};
	\addplot +[mark=none] table[x index=1,y index=4]{./data/BN_grad_mse2.dat};
	\addplot +[mark=none] table[x index=1,y index=5]{./data/BN_grad_mse2.dat};
	\addplot +[mark=none] table[x index=1,y index=6]{./data/BN_grad_mse2.dat};
	\addplot +[mark=none] table[x index=1,y index=7]{./data/BN_grad_mse2.dat};
	\addplot +[mark=none] table[x index=1,y index=8]{./data/BN_grad_mse2.dat};
	\addplot +[mark=none] table[x index=1,y index=9]{./data/BN_grad_mse2.dat};
	\addplot +[mark=none] table[x index=1,y index=10]{./data/BN_grad_mse2.dat};
	\addplot +[mark=none] table[x index=1,y index=11]{./data/BN_grad_mse2.dat};
	\addplot +[mark=none] table[x index=1,y index=12]{./data/BN_grad_mse2.dat};
	\addplot +[mark=none] table[x index=1,y index=13]{./data/BN_grad_mse2.dat};
%%%
\nextgroupplot[ylabel = {\scriptsize fixed norm}, xmode=log,ymode=log]
	\addplot +[mark=none] table[x index=1,y index=2]{./data/NL_grad_mse2.dat};
%%%
\nextgroupplot[xmode=log,ymode=log]
	\addplot +[mark=none] table[x index=1,y index=3]{./data/NL_grad_mse2.dat};
	\addplot +[mark=none] table[x index=1,y index=4]{./data/NL_grad_mse2.dat};
	\addplot +[mark=none] table[x index=1,y index=5]{./data/NL_grad_mse2.dat};
	\addplot +[mark=none] table[x index=1,y index=6]{./data/NL_grad_mse2.dat};
	\addplot +[mark=none] table[x index=1,y index=7]{./data/NL_grad_mse2.dat};
	\addplot +[mark=none] table[x index=1,y index=8]{./data/NL_grad_mse2.dat};
	\addplot +[mark=none] table[x index=1,y index=9]{./data/NL_grad_mse2.dat};
	\addplot +[mark=none] table[x index=1,y index=10]{./data/NL_grad_mse2.dat};
	\addplot +[mark=none] table[x index=1,y index=11]{./data/NL_grad_mse2.dat};
	\addplot +[mark=none] table[x index=1,y index=12]{./data/NL_grad_mse2.dat};
	\addplot +[mark=none] table[x index=1,y index=13]{./data/NL_grad_mse2.dat};
%%%
\nextgroupplot[ylabel = {\scriptsize without norm}, xlabel={\scriptsize optimizer steps}, xmode=log,ymode=log]
	\addplot +[mark=none] table[x index=1,y index=2]{./data/LN_grad_mse2.dat};
%%%
\nextgroupplot[xlabel={\scriptsize optimizer steps},xmode=log,ymode=log]
	\addplot +[mark=none] table[x index=1,y index=3]{./data/LN_grad_mse2.dat};
	\addplot +[mark=none] table[x index=1,y index=4]{./data/LN_grad_mse2.dat};
	\addplot +[mark=none] table[x index=1,y index=5]{./data/LN_grad_mse2.dat};
	\addplot +[mark=none] table[x index=1,y index=6]{./data/LN_grad_mse2.dat};
	\addplot +[mark=none] table[x index=1,y index=7]{./data/LN_grad_mse2.dat};
	\addplot +[mark=none] table[x index=1,y index=8]{./data/LN_grad_mse2.dat};
	\addplot +[mark=none] table[x index=1,y index=9]{./data/LN_grad_mse2.dat};
	\addplot +[mark=none] table[x index=1,y index=10]{./data/LN_grad_mse2.dat};
	\addplot +[mark=none] table[x index=1,y index=11]{./data/LN_grad_mse2.dat};
	\addplot +[mark=none] table[x index=1,y index=12]{./data/LN_grad_mse2.dat};
	\addplot +[mark=none] table[x index=1,y index=13]{./data/LN_grad_mse2.dat};
%%%
\end{groupplot}   

\begin{scope}[xshift = 6.5cm]
\begin{groupplot}[
y tick label style={/pgf/number format/.cd,fixed,precision=3},
scaled y ticks = false,
legend style={at={(1,1)} , font=\tiny ,
/tikz/every even column/.append style={column sep=-0.1cm}
 },
         group
         style={group size= 1 by 3, xlabels at=edge bottom,
         yticklabels at=edge left,
         xticklabels at=edge bottom,
         horizontal sep=0.6cm, vertical sep=0.2cm,
         }, 
        width=0.24\textwidth,height=0.19\textwidth,
         ymin = -0.1,
         ymax = 1.1,
         ]
%%%
%%%
\nextgroupplot[title = {fitted value}]
	\addplot +[mark=none] table[x index=1,y index=2]{./data/BN_fitted2.dat};
	\addplot +[mark=none] table[x index=1,y index=3]{./data/BN_fitted2.dat};
%%%
\nextgroupplot[]
	\addplot +[mark=none] table[x index=1,y index=2]{./data/NL_fitted2.dat};
	\addplot +[mark=none] table[x index=1,y index=3]{./data/NL_fitted2.dat};
%%%
\nextgroupplot[]
	\addplot +[mark=none] table[x index=1,y index=2]{./data/LN_fitted2.dat};
	\addplot +[mark=none] table[x index=1,y index=3]{./data/LN_fitted2.dat};
      
\end{groupplot}   
\end{scope}
\end{tikzpicture}
\end{center}
\caption{The left panel shows the training error and gradients of different layers over the number of gradient descent steps for optimizing a linear network (no activation functions) with a single channel in each layer only.
The right panel shows the fitted response (red) and true response (blue). The results show that normalization is critical for reaching a good minima, and that both normalization with trained and fixed scale and shift parameters works similarly well.
}
\label{fig3}
\end{figure}

%%%

\section{Proof of Theorem~\ref{thm:expmanysteps}}

Our proof relies on diagonalizing the circulant matrices implementing the convolutions with the Fourier transform. 

\subsection{Linear single channel CNNs in the Fourier domain}

Since the matrices $\mW_i$ are circulant, they can be diagonalized with the Fourier transformation. 
As a consequence, the loss function of a linear CNN becomes a sum of loss functions of one dimensional single channel deep linear neural networks. 

With $\mW_i = \mF\empdiag(\sqrt{n}\mF^{H}\vw_i) \mF^{H}$, where $\mF$ is the unitary $n \times n$ discrete Fourier transform matrix and $\vw_i$ is the first column of $\mW_i$, the network's output without normalization can be expressed as
\begin{align*}
f(\vx,\mW) 
= \prod_{i=1}^{d}\mW_i \vx 
= \prod_{i=1}^{d}\mF\empdiag(\sqrt{n}\mF^{H}\vw_i)\mF^H \vx 
= n^{d/2} \mF\left(\prod_{i=1}^{d} \empdiag(\mF^{H}\vw_i)\right)\mF^H \vx.
\end{align*}
With this expression, the loss function becomes
\begin{align*}
L(\mW,\vx,\vy) 
= 
\frac{1}{2}\norm{\vy-f(\vx,\mW)}^2 
&= \frac{1}{2} \norm{\vy - n^{d/2} \mF \left(\prod_{i=1}^{d} \empdiag(\mF^{H}\vw_i)\right)\mF^H \vx}^2  \\
&= 
\frac{1}{2} \norm{\mF^H \vy - n^{d/2} \left(\prod_{i=1}^{d} \empdiag(\mF^{H}\vw_i)\right)\mF^H \vx}^2  \nonumber \\
&=
\frac{1}{2} \sum_{j=1}^{n} \abs{\vf_j^H \vy - n^{d/2} \left(\prod_{i=1}^{d} \vf_j^{H}\vw_i \right)\vf_j^H \vx}^2,\nonumber 
\end{align*}
where $\vf_j$ is the $j$-th column of the Fourier matrix $\mF$.

%%%

\subsection{Proof of Theorem~\ref{thm:expmanysteps}}

In this section, we show that gradient descent takes exponentially many steps to converge under Xavier initialization, a standard initialization scheme.
The proof follows a similar line of arguments as a very related result by Shamir~\cite{shamir2018exponential} on the hardness of optimizing one-dimensional deep neural networks.

Assume the kernel size of the $i$-th convolution layer is $p \geq 1$. Thus, $\vw_i \in \mathbb{S}_p$, where $\mathbb{S}_p = \{\vz=(z_1, z_2,\ldots,z_n)| z_1,\ldots,z_{p} \in \mathbb{R}, z_{p+1},\ldots,z_n =0 \}$. We further denote $\vw \in \mathbb{S}_{p \times d}$, where $\mathbb{S}_{p \times d} = \{\vu = (\vu_1, \vu_2, \ldots, \vu_d)| \vu_1, \vu_2, \ldots, \vu_d \in \mathbb{S}_p \}$.

\begin{assumption}
\label{ass:xavier}
Assume $\vw_1, \vw_2, \ldots, \vw_d$ are drawn independently and the first $p$ entries of $\vw_i$ are drawn i.i.d from a distribution that satisfies %with mean $\mu = 0$ and variance $\sigma = \frac{1}{n^2}$, which also satisfies
\begin{align*}
&\PR{ \norm{\vw_i} \leq t } \leq c_1 t \quad \text{and} \\
&\EX{ \norm{\vw_i} }\leq \frac{1}{\sqrt{n}} (1 - c_2),
\end{align*}
where the constants $c_1, c_2 >0$ are independent of $d$.
\end{assumption}

%%% This part:
The assumption holds for some widely used initialization distributions, like the distribution $\mc N(0, \frac{1}{np})$ in the statement of the theorem, which follows from $\EX{\norm{\vw_i}} \leq (1-c_2)/\sqrt{n}$ with $c_2 = 1 - \sqrt{\frac{2}{p}}\frac{\Gamma((p+1)/2)}{\Gamma(p/2)}>0$ (since $\EX{\norm{\vw_i}} = \sqrt{\frac{2}{np}}\frac{\Gamma((p+1)/2)}{\Gamma(p/2)}$).

Next, we show that with the initialization satisfying Assumption~\ref{ass:xavier}, gradient descent takes at least exponentially many gradient descent iterations to reach a close-to-optimal point with high probability. 

The key idea of the proof of Theorem~\ref{thm:expmanysteps} is to show that if we start from such a random initialization 
$\vw= (\vw_1,\vw_2,\ldots, \vw_d)$, then gradient descent must take exponentially many steps to escape a ball of radius $r$ around $\vw$, defined as
\[
\mc B(\vw,r) 
= 
\left\{\vv = (\vv_1,\vv_2,\ldots, \vv_d)
\Big|
\sum_{i=1}^{d}\norm{\vv_i - \vw_i}^2 \leq r^2 \right\}.
\]
We then show that the loss function value evaluated at any point inside the ball is sub-optimal, i.e., for $\vv \in \mc B(\vw,r)$, $L(\vv) \geq c'$, where $c'$ is a constant. This will establish the proof.

To show that there exist a radius $r>0$ such that gradient descent takes at least exponentially many steps to escape the ball $\mc B(\vw,r)$. We notice that for any $\vv \in \mathbb{S}_{p \times d}$, running gradient descent on the non-zero entries of $\vv_k$ is equivalent to performing projected gradient descent on $\vv_k$ while constraining $\vv_k \in \mathbb{S}_p$. Hence, after each iteration, $\vv_k$ is updated by $\Pi_{\mathbb{S}_p}(\vv_k - \eta \nabla_{\vv_k}L(\vv))$, where $\eta$ is the stepsize and $\Pi$ is the orthogonal projection operator. Moreover, since $\mathbb{S}_{p}$ is a closed and convex set, the distance between $\vv^{(t)}_k$ and $\vv^{(t+1)}_k$ after $t$-th iteration can be upper bounded by 
\begin{eqnarray}
\norm{\vv^{(t+1)}_k - \vv^{(t)}_k}
=
\norm{\Pi_{\mathbb{S}_p}(\vv^{(t)}_k - \eta \nabla_{\vv_k} L(\vv^{(t)})) - \vv^{(t)}_k} 
&=& 
\norm{\Pi_{\mathbb{S}_p}(\vv^{(t)}_k - \eta \nabla_{\vv_k} L(\vv^{(t)})) - \Pi_{\mathbb{S}_p}(\vv^{(t)}_k)}   \nonumber \\
&\leq&
\eta \norm{\nabla_{\vv_k}L(\vv^{(t)})}.  \nonumber
\end{eqnarray}
Accordingly, $\vv^{(t)}$ is updated at most $\eta \norm{\nabla L(\vv^{(t)})}$ after $t$-th iteration. Therefore, the number of iterations required to escape a ball of radius $r$ is at least 
$r /(\eta \sup_{ \vv \in \mc B(\vw,r) } \norm{\nabla L(\vv)})$, since at each step, gradient descent can at most move by $\eta \sup_{ \vv \in \mc B(\vw,r)} \norm{\nabla L(\vv)}$.

The following lemma provides an upper bound on $\norm{\nabla L(\vv)}$, which enables us to show that the number of iterations must be large.

\begin{lemma}
\label{lem:grad}
%Assume $\vw$ is initialized under Assumption~\ref{ass:xavier}. 
Suppose that the initial point $\vw$ satisfies, for some 
$\alpha,\delta >0$, 
i) $\max_k \left(\prod_{i \neq k} \norm{\vw_i}\right) \leq \alpha$ 
and 
ii) $\min_i \norm{\vw_i} \geq \delta$.
Then, there exists a radius $r$ such that for all $\vv \in \mc B(\vw,r)$, it holds that 
$\prod_{i=1}^{d} \norm{\vv_i} \leq \alpha \exp\left(\frac{\sqrt{d}r}{\delta}\right) \max_k \norm{\vv_k}$ 
and
$\displaystyle \norm{\nabla L(\vv)} \leq
\norm{D(\vv)} \norm{\vx}\alpha \sqrt{d} \exp\left(\frac{\sqrt{d}}{\delta}r\right)$, where 
$L(\vv) = \frac{1}{2}\norm{D(\vv)}^2$ and $D(\vv) = \mF^H\vy-n^{d/2}\prod_{i=1}^{d}\empdiag(\mF^H\vv_i)\mF^H\vx$.
\end{lemma}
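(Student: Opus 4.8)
The plan is to first establish the multiplicative product bound $\prod_{i=1}^{d}\norm{\vv_i} \leq \alpha \exp\left(\frac{\sqrt{d}r}{\delta}\right)\max_k\norm{\vv_k}$, and then leverage the explicit gradient formula from the Fourier-domain reformulation to bound $\norm{\nabla L(\vv)}$. For the product bound, I would write $\prod_{i=1}^d \norm{\vv_i} = \norm{\vv_k}\prod_{i\neq k}\norm{\vv_i}$ for the index $k$ achieving the maximum, and control $\prod_{i\neq k}\norm{\vv_i}$ by comparing it to $\prod_{i\neq k}\norm{\vw_i}\leq\alpha$. Since $\vv\in\mc B(\vw,r)$ means $\sum_i\norm{\vv_i-\vw_i}^2\leq r^2$, each factor satisfies $\norm{\vv_i}\leq\norm{\vw_i}+\norm{\vv_i-\vw_i}=\norm{\vw_i}(1+\norm{\vv_i-\vw_i}/\norm{\vw_i})\leq\norm{\vw_i}(1+\norm{\vv_i-\vw_i}/\delta)$ using assumption (ii). Taking the product over $i\neq k$ and using $\prod(1+a_i)\leq\exp(\sum a_i)$ together with Cauchy--Schwarz, $\sum_i \norm{\vv_i-\vw_i} \leq \sqrt{d}\,(\sum_i\norm{\vv_i-\vw_i}^2)^{1/2}\leq \sqrt{d}\,r$, yields $\prod_{i\neq k}\norm{\vv_i}\leq\alpha\exp(\sqrt{d}r/\delta)$, which gives the claimed product bound.

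Next, for the gradient bound, I would differentiate $L(\vv) = \frac{1}{2}\norm{D(\vv)}^2$ with respect to $\vv_k$. Using the chain rule, $\nabla_{\vv_k} L(\vv) = \left(\frac{\partial D(\vv)}{\partial \vv_k}\right)^{\!H} D(\vv)$, and since $D(\vv)$ depends on $\vv_k$ only through the diagonal factor $\empdiag(\mF^H\vv_k)$, the Jacobian factors through the remaining product $n^{d/2}\prod_{i\neq k}\empdiag(\mF^H\vv_i)$ together with the constant part $\mF^H\vx$. The key observation is that the operator norm of $\empdiag(\mF^H\vv_i)$ is $\max_j|\vf_j^H\vv_i|\leq\norm{\mF^H\vv_i}=\norm{\vv_i}$ (the $\ell_\infty$ norm of a vector is at most its $\ell_2$ norm, and $\mF^H$ is unitary), so the Jacobian has operator norm at most $n^{d/2}\prod_{i\neq k}\norm{\vv_i}\cdot\norm{\mF^H\vx}$. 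A more careful accounting — matching the $n^{d/2}$ scaling against the circulant normalization, or equivalently absorbing it into the definition of $\mX_k$ — shows the relevant bound is $\norm{\nabla_{\vv_k}L(\vv)}\leq\norm{D(\vv)}\,\norm{\vx}\,\alpha\exp(\sqrt{d}r/\delta)$ (the $n^{d/2}$ is the right normalization so that the filters of norm $\approx 1/\sqrt{n}$ give a product of order one). Then $\norm{\nabla L(\vv)}^2 = \sum_{k=1}^d\norm{\nabla_{\vv_k}L(\vv)}^2 \leq d\left(\norm{D(\vv)}\norm{\vx}\alpha\exp(\sqrt{d}r/\delta)\right)^2$, and taking square roots gives the stated bound with the factor $\sqrt{d}$.

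The main obstacle I anticipate is the bookkeeping around the $n^{d/2}$ factor and the precise form of $\mX_k = \prod_{i\neq k}\mW_i\mX$: one must be careful that the "$\prod_{i\neq k}\norm{\vv_i}$" appearing in the product bound is exactly what controls $\norm{\mX_k}$ after the Fourier diagonalization, rather than something off by powers of $n$. Concretely, $\mX_k$ in the time domain equals $\mF\,n^{(d-1)/2}\prod_{i\neq k}\empdiag(\mF^H\vv_i)\,\empdiag(\sqrt{n}\mF^H\vx)\mF^H$, whose operator norm is $n^{(d-1)/2}\cdot\sqrt{n}\norm{\vx}_\infty\cdot\prod_{i\neq k}\norm{\vv_i}$... — reconciling this with the claimed bound requires tracking which normalization convention is in force (whether $\norm{\vx}$ denotes $\ell_2$ or a scaled norm) and confirming the $n$-powers cancel as intended. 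Once the scaling is pinned down, combining it with the already-established product bound $\prod_{i\neq k}\norm{\vv_i}\leq\alpha\exp(\sqrt{d}r/\delta)$ and summing over $k$ completes the argument; the rest is the routine Cauchy--Schwarz and $\prod(1+a)\leq\exp(\sum a)$ estimates already sketched above.
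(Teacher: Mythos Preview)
Your proposal is correct and follows the paper's argument essentially step for step: the same triangle-inequality/$\log(1+x)\leq x$/Cauchy--Schwarz chain to get $\prod_{i\neq k}\norm{\vv_i}\leq\alpha\exp(\sqrt{d}r/\delta)$, and the same per-block bound $\norm{\nabla_{\vv_k}L(\vv)}\leq n^{d/2}\norm{D(\vv)}\norm{\vx}\prod_{i\neq k}\norm{\vv_i}$ summed over $k$ to produce the $\sqrt{d}$. Your hesitation about the $n^{d/2}$ factor is well placed: the paper's own proof in fact terminates with $\norm{\nabla L(\vv)}\leq n^{d/2}\norm{D(\vv)}\norm{\vx}\alpha\sqrt{d}\exp(\sqrt{d}r/\delta)$, and that factor does not cancel---it is simply omitted from the lemma statement and then silently reinserted when the lemma is invoked in the proof of Theorem~\ref{thm:expmanysteps} (where the step count is written as $r/\Omega(\eta\, n^{d/2}\alpha\sqrt{d}\exp(\sqrt{d}r/\delta))$). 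So do not spend effort trying to make the $n$-powers disappear; carry the factor through.
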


Lemma~\ref{lem:grad} guarantees that in a ball $\mc B(\vw,r)$ around the initialization, the gradients of the loss function are strictly upper bounded. So, provided the stepsize is not too large, the progress made in each step of gradient descent is also upper bounded. Then, we show with high probability, that there is a radius $r$ that is much larger than the updates in each step. 

Evoking Lemma~\ref{lem:grad}, given a constant stepsize $\eta$, the number of steps required to escape the ball $\mc B(\vw,r)$ is at least $\frac{r}{\Omega(\eta n^{d/2} \alpha \sqrt{d} \exp(\frac{\sqrt{d}}{\delta}r))}$ 
if $\norm{D(\vv)} \norm{\vx}$ is upper bounded by a numerical constant. 
Next, we show that conditions i and ii from Lemma~\ref{lem:grad} hold with high probability for $\alpha = \frac{\exp(-2cd)}{n^{d/2}}$ and $\delta = \exp(-cd)$ given Assumption~\ref{ass:xavier}. 
We then show that there is a radius $r$ in which $\norm{D(\vv)}$ is upper bounded and gradient descent takes at least exponentially many steps to escape the ball $\mc B(\vw,r)$. 
%\paragraph{Upper bound of $\norm{D(\vv)}$:} 

We start by showing that conditions i and ii from Lemma~\ref{lem:grad} hold with high probability for $\alpha = \frac{\exp(-2cd)}{n^{d/2}}$ and $\delta = \exp(-cd)$.

\begin{lemma}
\label{lem:condi-iii}
Suppose $\vw$ is initialized satisfying Assumption~\ref{ass:xavier}. With probability at least $1 - \Omega(d e^{-cd})$, the conditions
\[
\text{i) 
$\max_k \prod_{i \neq k}\norm{\vw_i} \leq \frac{\exp(-2cd)}{n^{d/2}}$, 
ii) $\min_i \norm{\vw_i} \geq \exp(-cd)$, and
iii) $\max_i \norm{\vw_i} \leq \exp(cd)$
}
\]
hold simultaneously, where $c$ is a constant independent of $d$.
\end{lemma}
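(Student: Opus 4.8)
We want to show that the three conditions hold simultaneously with high probability, given that $\vw_1,\ldots,\vw_d$ are independent and each $\norm{\vw_i}$ satisfies the small-ball bound $\PR{\norm{\vw_i}\le t}\le c_1 t$ and the mean bound $\EX{\norm{\vw_i}}\le (1-c_2)/\sqrt n$ from Assumption~\ref{ass:xavier}. The natural route is to pass to logarithms: write $S_k = \sum_{i\neq k}\log(\sqrt n\,\norm{\vw_i})$ and $S = \sum_{i=1}^d \log(\sqrt n \,\norm{\vw_i})$, so condition i) reads $S_k \le -2cd$ for all $k$, and conditions ii), iii) read $\log(\sqrt n\,\norm{\vw_i})\in[\tfrac12\log n - cd,\ cd+\tfrac12\log n]$ for all $i$ (one must be slightly careful about the $\sqrt n$ bookkeeping, since $\prod_{i\neq k}\norm{\vw_i}\le \frac{\exp(-2cd)}{n^{d/2}}$ is $\prod_{i\neq k}(\sqrt n\norm{\vw_i})\le \exp(-2cd)$, i.e. $S_k\le -2cd$ exactly). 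Each summand $X_i := \log(\sqrt n\,\norm{\vw_i})$ is i.i.d.\ across $i$, so the sums are sums of i.i.d.\ random variables and we can apply concentration.

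The key quantitative input is that $\EX{X_i} =: -\mu$ is a strictly negative constant (independent of $d$). For this I would use Jensen's inequality, $\EX{\log(\sqrt n\,\norm{\vw_i})}\le \log(\sqrt n\,\EX{\norm{\vw_i}})\le \log(1-c_2) < 0$, which gives $\mu \ge \log\frac{1}{1-c_2}>0$. Then $\EX{S_k} = -(d-1)\mu$ and $\EX{S} = -d\mu$, both of order $-\Theta(d)$, so choosing $c$ small relative to $\mu$ (say $c \le \mu/4$) makes $-2cd$ comfortably above $\EX{S_k}$. Condition i) then follows from an upper-tail bound $\PR{S_k \ge \EX{S_k} + td}$ applied with $t$ a small constant, provided the upper tail of $X_i$ is light enough to give an exponential deviation bound; a union bound over the $d$ choices of $k$ costs only a factor $d$, yielding failure probability $\le d\,e^{-\Omega(d)}$. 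For conditions ii) and iii) I would instead control the individual summands: $\PR{\norm{\vw_i}\le \exp(-cd)/\sqrt n}\le \PR{\norm{\vw_i}\le \exp(-cd)}\le c_1 e^{-cd}$ directly from the small-ball bound, and $\PR{\norm{\vw_i}\ge \exp(cd)/\sqrt n} \le \PR{\norm{\vw_i}\ge \exp(cd)}$ which is $e^{-\Omega(d)}$ by a tail bound on $\norm{\vw_i}$ (equivalently on $X_i$); union bounds over the $d$ layers again cost only a factor $d$.

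**Main obstacle.** The assumptions as literally stated control only the mean of $\norm{\vw_i}$ and its small-ball (lower tail) behavior; to get exponential \emph{upper}-tail bounds on $X_i = \log(\sqrt n\,\norm{\vw_i})$ — needed both for condition iii) and for the upper-tail control of $S_k$ in condition i) — one needs some light-tail property of $\norm{\vw_i}$ that is not spelled out in Assumption~\ref{ass:xavier}. In the concrete Gaussian case $\vw_i\sim\mc N(0,\tfrac{1}{np}\mI_p)$ this is immediate (sub-Gaussian norm, so $X_i$ has sub-exponential upper tail), so the cleanest approach is either to invoke this for the stated initialization, or to augment Assumption~\ref{ass:xavier} with a matching upper bound such as $\PR{\norm{\vw_i}\ge t}\le c_1'\exp(-c_2' n t^2)$ or simply $\mathbb{E}[e^{\lambda X_i}]<\infty$ for $\lambda$ in a neighborhood of $0$. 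Granting such a bound, the rest is a routine Chernoff/Bernstein argument on sums of i.i.d.\ bounded-exponential-moment variables plus three union bounds, each over at most $d$ events, which combine to the claimed $1 - \Omega(d e^{-cd})$ — here one should track that the same $c$ works in all three conditions, which is arranged by taking $c$ to be the minimum of the constants produced by each of the three tail estimates and small enough that $2cd < d\mu/2 \le -\tfrac12\EX{S_k}$.
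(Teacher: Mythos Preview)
Your approach via logarithms and concentration of $S_k=\sum_{i\neq k}\log(\sqrt n\,\norm{\vw_i})$ is more elaborate than what is needed, and the ``main obstacle'' you flag is in fact illusory. The paper's proof uses nothing but Markov's inequality together with the two bounds in Assumption~\ref{ass:xavier}, and this already yields exponentially small failure probabilities because the thresholds themselves are exponential in $d$.

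Concretely: for condition~i) the paper observes that by independence
\[
\EX{\prod_{i\neq k}\norm{\vw_i}}=\prod_{i\neq k}\EX{\norm{\vw_i}}\le\Bigl(\tfrac{1-c_2}{\sqrt n}\Bigr)^{d-1},
\]
so Markov with $t=\exp(-2cd)/n^{d/2}$ and $c$ chosen via $\exp(-4c)=1-c_2$ gives $\PR{\prod_{i\neq k}\norm{\vw_i}\ge t}\le\Omega(e^{-2cd})$; a union bound over $k$ finishes. No Chernoff, no MGF, no upper-tail hypothesis is required. For condition~iii), Markov again: $\PR{\norm{\vw_i}\ge e^{cd}}\le \EX{\norm{\vw_i}}/e^{cd}\le (1-c_2)/(\sqrt n\,e^{cd})=O(e^{-cd})$, using only the mean bound. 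Condition~ii) is handled exactly as you wrote, directly from the small-ball assumption.

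So your route would work if you import sub-exponential moments for $X_i$ (true in the Gaussian case), but it buys nothing here and creates an apparent gap where none exists. The lesson is that when the deviation threshold is already $e^{\Theta(d)}$, a first-moment bound is enough; concentration of sums is only needed when you want deviations of order $\sqrt d$ or $d$ around a mean of order $d$, which is not the regime of this lemma.
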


\begin{proof}
With Markov's inequality and Assumption~\ref{ass:xavier}, we have, for $t>0$, that
\begin{align*}
%\PR{\abs{\prod_{i \neq k}(\vf^H_j \vw_i)} \geq t}
%\leq 
\PR{
\prod_{i \neq k}\norm{\vw_i} \geq t
}
&\leq
\frac{\prod_{i \neq k} \EX{ \norm{\vw_i}}}{t}
\leq 
\frac{((1-c_2)/\sqrt{n})^{d-1}}{t}.
\end{align*}
Let $c$ be a constant so that $\exp(-4c) = 1-c_2$ and set $t = \frac{\exp(-2cd)}{n^{d/2}}$. 
Then we obtain
\begin{align}
\PR{
\prod_{i \neq k}\norm{\vw_i} \geq \frac{\exp(-2cd)}{n^{d/2}}
}
\leq
\frac{\sqrt{n}\exp(-4c(d-1))}{\exp(-2cd)} = \Omega(\exp(-2cd)).  
\end{align}
Thus, by the union bound, 
$\max_k \prod_{i\neq k} \norm{\vw_i} \leq \frac{\exp(-2cd)}{n^{d/2}}$ with probability at least $1- \Omega(d\exp(-2cd))$.

We next consider $\min_i \norm{\vw_i}$. 
Again by Assumption~\ref{ass:xavier}, it holds for all $i$ that $\PR{\norm{\vw_i}\leq \exp(-cd)} \leq \Omega(\exp(-cd))$. Then, by the union bound, 
\begin{align}
\PR{\min_i \norm{\vw_i} \leq \exp(-cd)}
%= 
%\text{Pr}\left(\underset{i}{\bigcup} \norm{\vw_i} \leq \exp(-cd)\right) 
\leq 
\sum_{i=1}^{d} 
\PR{
\norm{\vw_i} \leq \exp(-cd)
}
\leq \Omega(d\exp(-cd)).
\end{align}
Finally, by the union bound, Markov's inequality, and Assumption~\ref{ass:xavier}, we have that
\begin{align}
\PR{
\max_i \norm{\vw_i} \geq \exp(cd)
}
\leq \sum_{i=1}^{d} \PR{\norm{\vw_i} \geq \exp(cd)} 
\leq
\Omega(d\exp(-cd)).
\end{align}
Thus, by a union bound, with probability at least 
\[
1-\Omega(\exp(-2cd)) - \Omega(d\exp(-cd)) - \Omega(d\exp(-cd)) = 1-\Omega(d\exp(-cd)),
\]
the conditions i-iii are satisfied simultaneously, which concludes the proof.
\end{proof}

Next, we show that under conditions i-iii in Lemma~\ref{lem:condi-iii}, provided that $d$ is large enough, $\norm{D(\vv)}$ is upper bounded by a constant independent of $d$ for  radius $r = \Omega(\frac{\delta}{\sqrt{d}}) = \Omega(\frac{\exp(-cd)}{\sqrt{d}})$.

Evoking Lemma~\ref{lem:grad}, we have
\begin{align*}
\prod_{i=1}^{d} \norm{\vv_i} 
&\leq 
\alpha\exp\left(\frac{\sqrt{d}r}{\delta}\right)  \max_k \norm{\vv_k} \\
&\leq
\frac{\exp(-2cd)}{n^{d/2}} \Omega(\exp(cd)) 
\max_k \norm{\vv_k} \\
&\leq \frac{\Omega(1)}{n^{d/2}},
\end{align*}
where the second inequality follows from our choice of $\alpha$ and 
$\exp(\frac{\sqrt{d}r}{\delta}) = \Omega(1)$ by our choice of $\delta$, 
and the third inequality follows from 
$\max_k \norm{\vv_k} \leq \max_k \norm{\vw_k} + r = \Omega(\exp(cd))$.

Then, we can upper bound $\norm{D(\vv)}$ when $d$ is large,

\begin{eqnarray}
\norm{D(\vv)} 
&=& \norm{\mF^H\vy-n^{d/2}\prod_{i=1}^{d}\empdiag(\mF^H\vv_i)\mF^H\vx}
\leq \sum_{j=1}^{n} \left(\abs{\vf^H_j \vy} + n^{d/2} \prod_{i=1}^{d} \abs{\vf^H_j\vv_i} \abs{\vf^H_j\vx} \right)   \nonumber     \\
&\leq& n \left(\norm{\vy} + n^{d/2} \prod_{i=1}^{d} \norm{\vv_i} \norm{\vx} \right)   
\leq n \left(\norm{\vy} + n^{d/2} \norm{\vx} \right).   \nonumber 
\end{eqnarray}
Therefore, $\norm{D(\vv)}$ is upper bounded by a constant (given $\vy$ is constant, and $\vx$ satisfies the assumption in theorem~\ref{thm:expmanysteps} independent of $d$.

Finally, we can prove that gradient descent takes at least exponentially many steps to escape $\mc B(\vw,r)$ with $r = \Omega(\frac{\exp(-cd)}{\sqrt{d}})$. By lemma~\ref{lem:condi-iii}, with conditions i-iii satisfied, the number of steps in $\mc B(\vw,r)$ is at least $\frac{r}{\norm{D(\vv)} \norm{\vx} \Omega(\eta n^{d/2} \alpha \sqrt{d} \exp(\frac{\sqrt{d}}{\delta}r))}$. 
Since $\norm{D(\vv)}$ can be upper bounded by a constant, and using that $\norm{\vx}$ is upper bounded by a constant (from $\norm{\vx} < \norm{\vy}$, with $\norm{\vy}$ upper bounded by a constant) and $\eta \leq \exp(\frac{d}{2})$ (by the assumptions in theorem~\ref{thm:expmanysteps}), $\frac{r}{\norm{D(\vv)} \norm{\vx} \Omega(\eta n^{d/2} \alpha \sqrt{d} \exp(\frac{\sqrt{d}}{\delta}r))} \geq \frac{r}{\Omega(\exp(-\frac{3}{2}cd + \frac{\sqrt{d}}{\delta}r))} = \Omega(\exp(\frac{1}{2}cd))$, which increases exponentially in $d$.

It remains to prove that for any $\vv \in \mc B(\vw,r)$ (with $r$ as chosen above), the loss function $L(\vv)$ is lower bounded away from the global minimum. We have proved that when $d$ is large, $\prod_{i=1}^{d} \abs{\vf^H_j \vv_i} \leq \prod_{i=1}^{d} \norm{\vv_i} \leq \Omega(1)$. From the assumption
$\norm{\vy} - \norm{\vx} dn^{d/2} \geq \tau$, there exists $t \in [n]$ such that 
$\abs{\vf^H_t \vy} - n^{d/2} \abs{\vf^H_t \vx} \geq \tau$. So, we have 
\begin{align}
\abs{\vf^H_t \vy - n^{d/2} \prod_{i=1}^{d} \left(\vf^H_t \vv_i\right) \vf^H_t \vx} &\geq \abs{\vf^H_t \vy} - \abs{n^{d/2} \prod_{i=1}^{d} \left(\vf^H_t \vv_i\right) \vf^H_t \vx} \nonumber \\
&\geq \abs{\vf^H_t \vy} - n^{d/2}  \prod_{i=1}^{d} \abs{\vf^H_t \vv_i} \abs{\vf^H_t \vx} \nonumber  \\
&\geq \abs{\vf^H_t \vy} - n^{d/2} \abs{\vf^H_t \vx} > \tau. \nonumber 
\end{align}
Thus, for $\vv \in \mc B(\vw,r)$, the loss function obeys 
$L(\vv) = \sum_{j=1}^{n} \abs{\vf^H_j \vy - n^{d/2} \prod_{i=1}^{d} \left(\vf^H_j \vv_i\right) \vf^H_j \vx} > \tau$ and is thus lower bounded away from zero-training error. 
%Hence, the global optimum locates outside of $\mc B(\vw, r)$.

%\mycomment{The statement says its by a constant bounded away from zero. That is not shown here! Fixed.}

%%%%
\subsection{Proof of Lemma~\ref{lem:grad}}

Let $\vv \in \mc B(\vw,r)$ as defined previously. We have 
\begin{align}
\norm{\nabla L(\vv)}^2 
&= 
\sum_{k=1}^{d} 
\norm{\nabla_{\vv_k} L(\vv)}^2
=
\sum_{k=1}^{d}
n^d \norm{D(\vv)}^2
\left(\prod_{i \neq k}\norm{\vv_i}\right)^2 \norm{\vx}^2 \nonumber \\
&\leq
\sum_{k=1}^{d}
n^d \norm{D(\vv)}^2
\left( \max_k \prod_{i \neq k}\norm{\vv_i}\right)^2 \norm{\vx}^2.
\label{eq:adf}
\end{align}
Here, we used that
\[
\norm{
\nabla_{\vv_k} L(\vv)
}
=
\norm{
n^{d/2}
\norm{D(\vv)}
\diag(\herm{\mF} \vx_i )
\prod_{i\neq k}
\diag(\herm{\mF} \vv_i )
}
\leq
n^{d/2} 
\norm{D(\vv)}
\norm{\vx}
\prod_{i\neq k}
\norm{\vv_i}.
\]
The lemma now follows from $\max_k \left(\prod_{i \neq k}\norm{\vv_i}\right) \leq \left(\alpha \sqrt{d} \right) \exp\left(\frac{\sqrt{d}}{\delta}r\right)$. 

Define $\vr_i = \vv_i- \vw_i$, for notational convenience, and note that 
$\sum_{i=1}^{d} \norm{\vr_i}^2 \leq r^2$
since $\vv \in \mc B(\vw,r)$. Then, we have
% a + b = a( 1 + b/a)
\begin{align}
\prod_{i \neq k} \norm{\vv_i} 
&= \prod_{i \neq k} \norm{\vw_i + \vr_i}  \nonumber \\
&\leq \left(\prod_{i \neq k} \norm{\vw_i}\right)\left[\prod_{i \neq k}\left(1 + \frac{\norm{\vr_i}}{\norm{\vw_i}} \right)\right] \nonumber \\
&\leq \left(\prod_{i \neq k} \norm{\vw_i}\right)\left[\prod_{i \neq k}\left(1 + \frac{\norm{\vr_i}}{\delta} \right)\right]  \nonumber \\
&= 
\left(\prod_{i \neq k} \norm{\vw_i}\right)\exp\left[\sum_{i \neq k} \log\left(1 + \frac{\norm{\vr_i}}{\delta} \right)\right] \leq \left(\prod_{i \neq k} \norm{\vw_i}\right)\exp\left[\sum_{i \neq k}  \frac{\norm{\vr_i}}{\delta} \right],
\label{eq4}
\end{align}
where the second inequality follows from the assumption $\min_i \norm{\vw_i} \geq \delta$. 
We also have 
\begin{align}
r^2 \geq \sum_{i=1}^{d} \norm{\vr_i}^2 \geq \frac{\left(\sum_{i=1}^{d}\norm[1]{\vr_i}\right)^2}{d}.
\label{eq5}
\end{align}
Inserting equation~\eqref{eq5} into equation~\eqref{eq4}, we get
\begin{align}
\prod_{i \neq k}\norm{\vv_i} 
\leq 
\left(\prod_{i \neq k} \norm{\vw_i}\right)\exp\left(\frac{\sqrt{d}r}{\delta} \right) 
\leq
\alpha \exp\left(\frac{\sqrt{d}r}{\delta} \right),  \nonumber
\end{align}
where the last inequality follows from the assumption 
$\max_k \left(\prod_{i \neq k} \norm{\vw_i}\right) \leq \alpha$. Therefore, $\prod_{i=1}^{d} \norm{\vv_i} \leq \alpha\exp\left(\frac{\sqrt{d}r}{\delta}\right) \cdot \max_k\norm{\vv_k}$. Application of this inequality in~\eqref{eq:adf} yields $\norm{\nabla L(\vv)} \leq \sqrt{d} \cdot \max_k \norm{\nabla_k L(\vv)} \leq n^{d/2} \norm{D(\vv)}\norm{\vx}\left(\alpha \sqrt{d} \right) \exp \left(\frac{\sqrt{d}}{\delta}r\right)$, which concludes the proof.

%%%

%%%%

\section{Channel normalization on a single channel linear CNN}
\label{sec:normlossgrad}

We consider channel normalization with scale parameter fixed to $\gamma_{ij}=1$ and the shift parameter fixed to $\beta_{ij}=0$. Hence, merely varying the scale of the parameters does not change the output of the network with channel normalization $f_N(\mW,\vx)$. 

To simplify the derivation we assume the input vector $\vx$ is centered, i.e., its entries sum to zero. Let $\vz_i$ be the input of the $i$-th layer. Then, the output of the convolutional layer can be written as $\mW_i \vz_i$, where $\mW_i$ is the circulant matrix implementing the convolution operation. Channel normalization centers the mean and adjusts the empirical variance to one. Since $\mW_i$ is a convolution operation, given $\vz_i$ is centered, $\mW_i \vz_i$ is centered as well. 
Thus, the effect of channel normalization in this setup is to normalize the scale of the vector.

It follows that the output of the network with channel normalization can be written as 
\begin{align}
\label{eq:expfn}
f_N(\mW,\vx)    
= 
w_{d+1} \frac{\prod_{i=1}^{d}\mW_i\vx}{\norm{\prod_{i=1}^{d} \mW_i \vx}},
\end{align}
where $w_{d+1}$ is a scale parameter that we introduced so that $f_N(\mW,\vx)$ can exhaust $\reals^n$.

To see that the output of the network can be written as in equation~\eqref{eq:expfn}, note that the input of the first layer is $\vz_1 = \vx$, with $\vx$ centered. The normalization of the first layer yields the input of the second layer by division by $\norm{\mW_1 \vz_1}/\sqrt{n} = \norm{\mW_1 \vx}/\sqrt{n}$, which gives
\[
\vz_2 = \frac{\mW_1 \vx}{\norm{\mW_1 \vx}/\sqrt{n}}.
\]
The normalization operation at the second layer divides by
\[
\norm{\mW_2 \vz_2}/\sqrt{n} = \norm{\frac{\sqrt{n}\mW_2 \mW_1 \vx}{\norm{\mW_1 \vx}}}/\sqrt{n} = \frac{\norm{\mW_2 \mW_1 \vx}}{\norm{\mW_1 \vx}},
\]
which yields
\[
\vz_3 = \frac{\mW_3 \vz_2}{\norm{\mW_2 \mW_1 \vx}/\norm{\mW_1 \vx}} = \frac{\mW_3 \mW_2 \mW_1 \vx}{\norm{\mW_3 \mW_2 \mW_1 \vx}/\sqrt{n}}.
\]
Continuing this logic yields equation~\eqref{eq:expfn}, where we absorbed $\sqrt{n}$ in the parameter $w_{d+1}$.
%
%%Thus, we have that
%\begin{align}
%%f_N(\mW,\vx,\vy)    
%%&= \frac{\prod_{i=1}^{d}\mW_i\vx}{\norm{\prod_{i=1}^{d} \mW_i \vx}/\sqrt{n}} \nonumber  \\
%%&= \frac{\prod_{i=1}^{d}\mW_i \vx} {\sqrt{n}^{d-1}\norm{\prod_{i=1}^{d}\empdiag(\mF^H \vw_i) \mF^H \vx}} \nonumber  \\
%&= \sqrt{n}\frac{\mF\left(\prod_{i=1}^{d} \empdiag(\mF^{H}\vw_i)\right)\mF^H \vx}{\norm{\prod_{i=1}^{d}\empdiag(\mF^H \vw_i) \mF^H \vx}}. \nonumber
%\end{align}
%
%With this notation, the loss function becomes
%\begin{align}
%L_N(\mW,\vx,\vy) 
%&= \frac{1}{2}\norm{\vy-f(\mW,\vx,\vy)}^2  \nonumber \\
%&= \frac{1}{2} \norm{\mF^H \vy - \sqrt{n}\frac{\left(\prod_{i=1}^{d} \empdiag(\mF^{H}\vw_i)\right)\mF^H \vx}{\norm{\prod_{i=1}^{d}\empdiag(\mF^H \vw_i) \mF^H \vx}}}^2 \nonumber
%\end{align}

\subsection{Gradient analysis}

Note that with $\mX \in \reals^{n\times n}$ the circulant matrix with first column equal to $\vx$, we have that $\mW_k \vx = \mX \vw_k$ and the channel normalized output of the network becomes
\[
f_N(\mW,\vx) 
= 
w_{d+1}
\frac{ \mX_k \vw_k }{
\norm{ \mX_k \vw_k }
},
\]
where we defined $\mX_k = \prod_{i\neq k} \mW_i \mX$ for notational convenience.
We next compute the gradient
\[
\nabla_{\vw_k} L(\mW,\vx,\vy),
\quad
L(\mW,\vx,\vy)
=
\norm{ \vy - f_N(\mW,\vx) }^2.
\]
Towards this goal, first note that
\[
\nabla_\vz \norm{\vy - \gamma \frac{\vz}{ \norm{\vz} } }^2
=
\frac{\gamma}{\norm{\vz}} \left(
\mI - \vz \transp{\vz} / \norm{\vz}^2
\right)
(\vy - \gamma \vz).
\]
Thus, by the chain rule
\begin{align*}
\nabla_{\vw_k} L(\mW,\vx,\vy)
&=
w_{d+1} \transp{\mX}_k
\frac{1}{\norm{ \mX_k \vw_k }} \left(
\mI - \mX_k \vw_k \transp{\vw}_k \transp{\mX}_k  / \norm{\mX_k \vw_k }^2
\right)
(\vy - w_{d+1} \mX_k \vw_k)\\
&=
\transp{\mX}_k
\frac{w_{d+1}}{\norm{ \mX_k \vw_k }} \left(
\mI - \mX_k \vw_k \transp{\vw}_k \transp{\mX}_k  / \norm{\mX_k \vw_k }^2
\right) \vy.
\end{align*}
From direct computation,
\[
\innerprod{
\nabla_{\vw_k} L(\mW,\vx,\vy)
}{
\vw_k
} = 0,
\]
thus the gradient is orthogonal to $\vw_k$. 
Moreover, if $\vw_k$ is multiplied with a positive scalar $\gamma$, then the gradient scales with $1/\gamma$.

%%%%%%%%%%%%%%%%%%%%%%%%%%%%%%%%%%%%%%%%%%%%%%%%%%%%%%%%%%%%%%%%%%%%%%%%%%%%%%%

\end{document}